\documentclass[9pt,shortpaper,twoside,web]{ieeecolor2}
\usepackage{generic}
\usepackage{cite}
\usepackage{amsmath,amssymb,amsfonts}
\usepackage{algorithmic}
\usepackage{graphicx}
\usepackage{textcomp}
\usepackage[T1]{fontenc}
\usepackage{multirow,multicol,graphicx}

\newtheorem{proposition}{Proposition}

\usepackage[hidelinks]{hyperref}
\def\BibTeX{{\rm B\kern-.05em{\sc i\kern-.025em b}\kern-.08em
    T\kern-.1667em\lower.7ex\hbox{E}\kern-.125emX}}
\begin{document}
\title{EEG-PRISM: Projection-based Interpretable Signal Modeling for Clinically-Grounded Explainability in EEG Foundation Models}
\title{EEG-PRISM: Physiologically-Grounded Interpretability of Predictions \\ by EEG Foundation Models}
\author{Deeksha M. Shama, \IEEEmembership{Graduate Student Member, IEEE}, Punnisa Amornsirikul, Archana Venkataraman \IEEEmembership{Senior Member, IEEE}
\thanks{This is a pre-print. Manuscript is currently under peer-review}
\thanks{This work was supported in part by NIH R01DC022565 (PI: Venkataraman) and NIH R01HD108790 (PI: Venkataraman). }
\thanks{D. M. Shama is currently with the Department of Electrical and Computer Engineering at Johns Hopkins University, MD, USA 21218 and visiting student researcher at Boston University, MA, USA 02215 (email: dshama1@jhu.edu)}
\thanks{P. Amornsirikul is currently with Boston University, MA, USA 02215 (email: punnisa@bu.edu)}
\thanks{A. Venkataraman was with Johns Hopkins University and is currently with the Department of Electrical and Computer Engineering at Boston University, MA, USA 02215 (email: archanav@bu.edu)}}

\maketitle

\begin{abstract}
\textcolor{cyan}{\textit{Objective:}}
Foundation models represent the next advancement in AI for EEG analysis; however current explainable AI techniques provide attribution scores in the time-channel input space, which is mismatched to clinical intuition about EEG. Thus, there is a critical need for a universal method that can extend the interpretability of \textit{any} foundation model to alternative and physiologically relevant domains \textit{without modifying or re-training the underlying model}.
\textcolor{cyan}{\textit{Methods:}} EEG-PRISM leverages linear transformations and established backpropagation rules to map time-channel attribution scores into alternative domains. We derive mappings to the frequency domain via an invertible DFT and to the source domain via an approximately invertible EEG generative model. We evaluate EEG-PRISM in simulated and real data, assessing recovery of ground-truth phenomena across domains with five foundation models and four AI explainers.
\textcolor{cyan}{\textit{Results:}} In simulation, EEG-PRISM achieves near-perfect spectral recovery and 69.2\% spatial accuracy. In epilepsy, EEG-PRISM correctly determines that delta–theta activity is most salient and correctly localizes the seizure onset region with 50\% accuracy. In autism, EEG-PRISM localizes the predictive delta–alpha biomarkers to frontal and temporal regions, consistent with prior work.
\textcolor{cyan}{\textit{Conclusion:}} EEG-PRISM is a theoretically-grounded post-hoc attribution method with accurate mapping into the spectral and spatial domains. It supports window-level analysis of transient events (e.g., seizures) and group-level identification of clinically relevant biomarkers (e.g., autism), thus advancing interpretable EEG foundation models.
\textcolor{cyan}{\textit{Significance:}} This work enables physiologically-grounded interpretation of EEG foundation models and supports clinically relevant insights such as event localization and biomarker identification.
\end{abstract}

\begin{IEEEkeywords}
Foundation models, EEG, Explainability, Domain-Aware Interpretability, Linear Transformations 
\end{IEEEkeywords}

\section{Introduction}

Scalp electroencephalography (EEG) provides a non-invasive window into brain activity by capturing the rich temporal dynamics of underlying neural processes. In addition, variations in the structured spatial and spectral activity of EEG signals are closely linked to neurological disorders and pathology. For example, in the widely-studied condition of epilepsy, synchronous spiking activity and evolving spectral patterns across frequency bands in the EEG signals provide important insights into seizure dynamics~\cite{tharp1975spectral, goenka2018comparative}; they are also used to distinguish seizure subtypes and guide treatment options \cite{zaveri2001distinguishing, ouyang2018quantitative}. In the research of psychiatric conditions using EEG \cite{hughes1999conventional}, studies have linked alterations in spectral activity, primarily in the lower frequency bands, to disorders such as ADHD and OCD~\cite{newson2019eeg}. While findings in autism remain mixed, there is some evidence suggesting differences in higher-frequency (e.g., gamma) activity \cite{neo2023resting}.  

In the spatial domain, EEG signals are modeled as the result of electrical activity generated by distributed neural sources which propagate through head tissues to the scalp according to a well-established forward model \cite{hallez2007review}. Particularly with high density recordings, there is increasing interest in leveraging these spatial relationships to ``invert" the forward model and map the EEG activity to cortical regions in the brain. For example, scalp EEG can  offer approximate localization of seizure onset regions and serve as a noninvasive precursor to invasive evaluation of epileptogenic zones~\cite{ray2007localizing}. Source-level interpretations are also increasingly leveraged for psychiatric conditions~\cite{coburn2006value}.

Despite its clinical value, EEG is complex, high-dimensional, and often contaminated by artifacts. These factors make manual interpretation challenging, particularly for continuous and multichannel recordings. The rise of artificial intelligence has led to computational methods for EEG analysis, with increasing emphasis on learning representations directly from EEG data that generalize across datasets and tasks \cite{sharma2024emerging}. While these models can achieve high prediction accuracies, their clinical utility is limited by their lack of explainability, as shown in Fig.~\ref{fig:problem}. In particular, these models operate on EEG signals as input, so post-hoc explainers, such as GradCam~\cite{zhao2025interpretable} and SHAP~\cite{sylvester2024shap}, provide temporal and channel-level attribution scores. This information does not align with how EEG is understood in practice, where diagnostic insights are more naturally expressed in the spectral and spatial domains. Thus, there is a growing need for trustworthy approaches whose interpretations are not only accurate, but also grounded in physiologically meaningful representations that can integrate seamlessly into clinical workflows. 

\begin{figure}[t]
    \centering
    \includegraphics[width=0.99\linewidth]{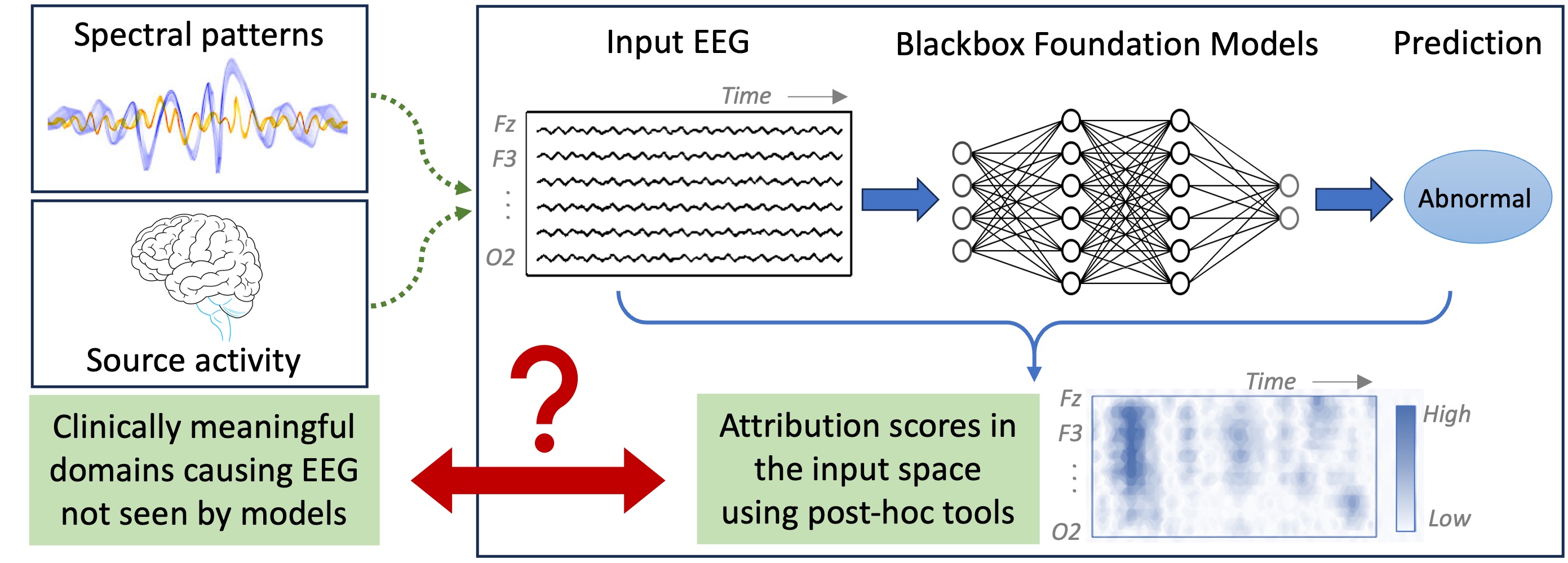}
    \caption{EEG foundation models and their explainability using post-hoc tools operate in time-channel space, while clinically meaningful patterns lie in spectral and source domains that govern EEG generation.}
    \label{fig:problem}
\end{figure}

\subsection{EEG Foundation Models as Black-Box Computational Tools}

EEG foundation models are increasingly replacing traditional feature-engineering pipelines due to their improved performance and ease of deployment. Traditional approaches are often pipelined to include preprocessing, handcrafted feature extraction, and a prediction module (e.g., logistic regression or support vector machines). The handcrafted features (e.g., Fourier transforms, wavelet decompositions, and spectral power) are motivated by neuroscientific insights, but they require extensive experimentation to identify representations that best capture the phenomenon of interest \cite{singh2023trends}. In contrast, deep neural networks learn hierarchical representations directly from the time–channel EEG inputs and reduce the need for manual feature design \cite{hornik1991approximation}. With sufficient training data, these networks can achieve greater predictive performance than traditional approaches, and they often demonstrate improved generalization across datasets \cite{kuruppu2026eeg}.

Foundation models represent the next advancement in AI for EEG analysis. They build on early deep neural networks by (1) pretraining the model on large-scale EEG datasets that comprise thousands of hours of recordings; and (2) using self-supervised learning to provide flexibility across downstream prediction tasks. This setup is in contrast to earlier deep neural networks, including convolutional architectures such as EEGNet \cite{lawhern2018eegnet} and hybrid convolution–transformer models such as EEGConformer \cite{song2022eeg}, which require task-specific training from scratch.  Current EEG foundation models adopt a range of deep neural network architectures for large-scale self-supervised representation learning.  All models learn the self-supervised representations directly from the EEG signals (i.e., time-channel inputs). Early models, such as BENDR~\cite{kostas2021bendr}, adapt a contrastive prediction framework from speech analysis to EEG, while more recent models like BIOT~\cite{yang2023biot} employ masked training to learn temporal dependencies from a large corpora. Other works focus on improving input representations, including Labram~\cite{jiang2024large}, which introduces discrete neural tokenization, and FoME~\cite{shi2024fome}, which incorporates time–frequency fusion to better capture spectral structure. Models such as Cbramod~\cite{wang2025cbramod} explicitly separate spatial and temporal attention mechanisms, whereas LUNA~\cite{doner2026luna} addresses data heterogeneity by learning topology-agnostic latent representations. 

While powerful, foundation models operate as black-box units and learn representations directly from the time-channel input data. This strategy enables rapid adaptation and strong performance across datasets, but their decision mechanisms remain difficult to interpret. Moreover, current explainable AI techniques attempt to quantify input-output relationships and cannot provide explanations in the clinically meaningful spectral and spatial domains. On the other hand, incorporating domain-relevant information into the foundation model is computationally expensive, requiring architectural changes and retraining. Taken together, there is a need for an explainability tool that directly extract physiologically-grounded explanations with minimal overhead as a crucial stepping stone towards clinical translation.

\subsection{Explainability in Deep Learning Pipelines}
Current explainable AI methods do not aim for full transparency of black-box models, but rather to address the question of \textit{why a particular prediction is made}. Given the difficulty of modifying black-box deep neural networks, many widely used methods are post hoc, operating after model training and inference. These methods are typically model-agnostic and assign ``attribution scores" to quantify the contribution of each input feature to the model prediction. Notably for EEG foundation models, the attribution scores are provided on the time-channel inputs, which may be difficult to interpret in the context of a neurological disorder \cite{presacan2026comprehensive, maurer2024explainable}. 

At a high level, post-hoc explainers use specific rules to backpropagate information through the neural network in order to compute the attribution scores. Popular methods include:
 \begin{itemize}
     \item \textbf{Layer-wise Relevance Propagation (LRP)} redistributes the model prediction backward through the network according to conservation principles involving different propagation rules, such as the $\epsilon$-rule and $\gamma$-rule for neural networks, as well as customized rules for transformers \cite{montavon2019layer, achtibat2024attnlrp, bexten2026clever, nam2023effects, nouri2024detection}.
    \item \textbf{Integrated Gradients} constructs attribution scores by integrating the gradients of the model output with respect to the input along a path from a baseline to the observed sample. This strategy ensures that the attributions are highly sensitive and invariant to implementation across equivalent models~\cite{sundararajan2017axiomatic, maiti2025neuroexplain, tachikawa2018compensated}. 
    \item \textbf{DeepLIFT} computes the attribution scores by comparing neuron activations to a reference baseline and propagating these differences through the network. This approach allows for non-zero attributions even in regions where gradients are saturated, providing more robust attribution scores \cite{shrikumar2017learning, park2023spatio, sujatha2023empirical}
    \item \textbf{SHAP} assigns input-level scores that correspond to theoretically-grounded Shapley values. DeepSHAP extends the original framework to deep networks by combining SHAP’s attribution principle with DeepLIFT-style propagation. This combination enables the scalable approximation of Shapley-consistent attributions in complex models \cite{lundberg2017unified, shawly2025eeg, mouazen2025transparent, almadhor2025interpretable, sylvester2024shap}.
 \end{itemize}

Given that EEG foundation models operate directly on time–channel inputs, attribution scores obtained using any of these post-hoc methods are expressed in the time–channel domain. However, such representations are not always interpretable or actionable by a clinician. For example, temporal attributions at the millisecond scale may not align with clinically relevant patterns, which often span longer durations. Similarly, channel-wise attributions can be misleading due to strong correlations between electrodes. As previously discussed, clinical phenomena in EEG are more often associated with spectral (frequency-domain) and spatial (source-domain) patterns.

\subsection{Our Contributions}

In this paper, we introduce EEG-PRISM for \textbf{P}hysiologically-g\textbf{R}ounded \textbf{I}nterpretability of Prediction\textbf{S} by EEG foundation \textbf{M}odels. Our work includes several novel contributions for EEG analysis:
\begin{itemize}
    \item A \textbf{mathematically grounded framework} to map EEG foundation model attributions into clinically meaningful frequency and source domains using DFT and inverse modeling. Our theoretical contributions include a derived error bound and guaranteed compatibility across multiple explainers via backpropagation.
    \item A \textbf{model-agnostic and post-hoc approach}  that does not require any modification to the existing foundation model; EEG-PRISM also has minimal computational overhead.
    \item \textbf{Extensive simulation studies} with known ground truth in the spectral and source domains demonstrate the attribution fidelity of EEG-PRISM across five different EEG foundation models. 
    \item \textbf{Real-world biomedical validation} in seizure localization and autism biomarker discovery. EEG-PRISM enables both subject-specific event detection and group-level neural characterization.
\end{itemize}
Taken together, we present a valuable  tool that aligns the trajectory of AI-based EEG analysis with clinical and research practice.

\begin{figure*}[t]
    \centering
    \includegraphics[width=0.95\linewidth]{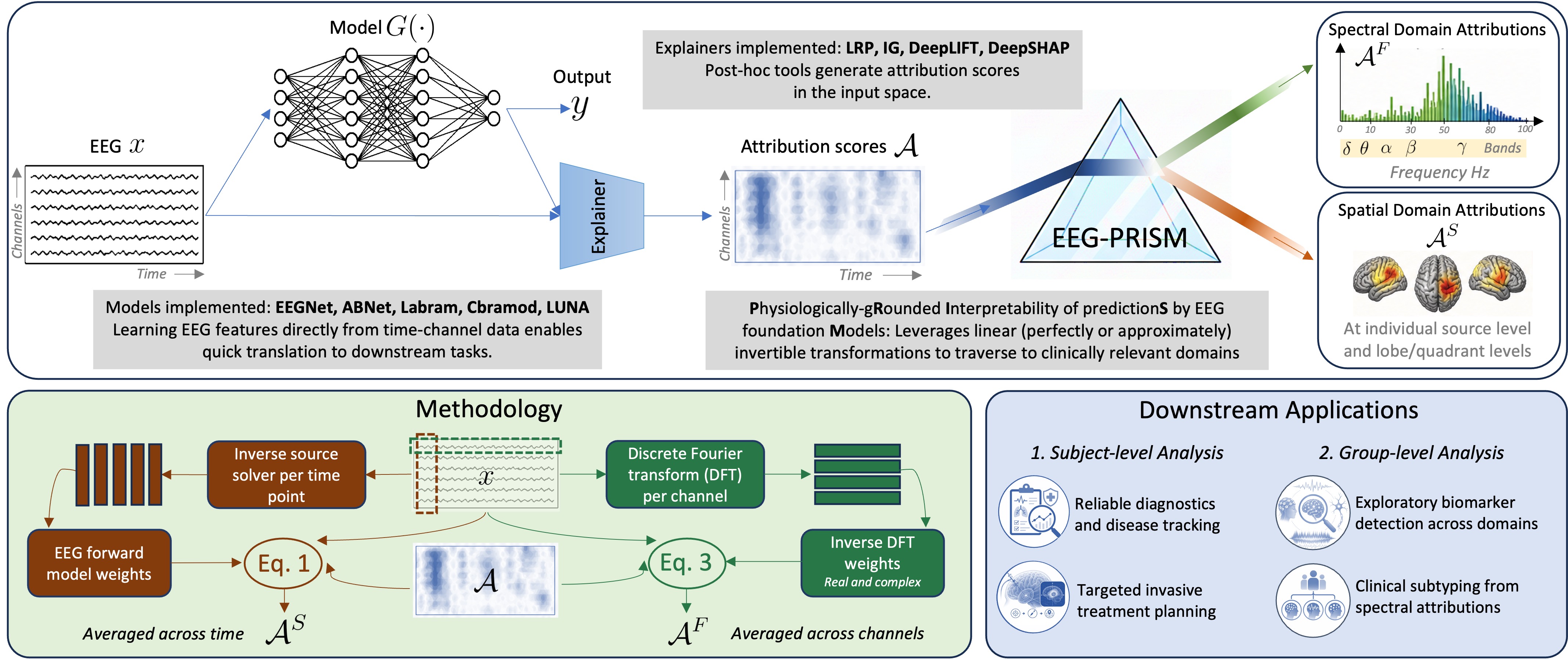}
    \caption{Overall pipeline of EEG-PRISM for explainability in clinically relevant domains beyond the input time-channel space. \textbf{Top:} End-to-end experimental setup, in which EEG-PRISM is applied to foundation models post-hoc to obtain attributions in the spectral and spatial domains. \textbf{Bottom left:} Transforming attributions to  new domains within EEG-PRISM. \textbf{Bottom right:} Potential downstream applications.}
    \label{fig:prism}
\end{figure*}

\section{Methods}

EEG-PRISM leverages the principle of \textit{attribution propagation} to map explanations  derived from foundation models that operate on the time–channel EEG input space onto clinically meaningful subspaces. Mathematically, let $X \in \mathbb{R}^{C \times T}$ denote the input EEG, where $C$ is the number of channels and $T$ the number of time points. The AI foundation model $G(\cdot)$ uses $X$ for a prediction task (e.g., case/control classification), leading to post-hoc attributions $\mathcal{A} \in \mathbb{R}^{C \times T}$ in the input space. Our goal is to transform $\mathcal{A}$ into (1)~spectral attributions $\mathcal{A}^F$ that capture predictive frequency components, and (2)~spatial attributions $\mathcal{A}^S$ that capture importance across source locations. Crucially, the transformation should not require modification or retraining of $G(\cdot)$. Noting that the spectral domain can be reached from the input space via an invertible discrete Fourier transform, while the spatial domain arises by inverting the EEG forward model, the full pipeline for EEG-PRISM is shown in Fig.~\ref{fig:prism}.

\subsection{Mapping Attribution Scores to Clinically Relevant Domains}

Consider a generic linear transform between the input and desired subspace. Formally, let $\mathbf{x} \in \mathbb{R}^{N}$ denote a vector in either the row space (N = T) or column space (N = C) of the EEG input $X \in \mathbb{R}^{C \times T}$. Let $\mathbf{z} \in \mathbb{R}^{M}$ with $M \geq N$ represent the corresponding vector in the target subspace, such that $\mathbf{x} = W \mathbf{z}$, where $W \in \mathbb{R}^{N \times M}$ is a well-defined linear transformation.
EEG-PRISM leverages the following proposition to directly map the attribution scores:

\begin{proposition}
    The attribution scores in the $z$-space, denoted by $\mathcal{A}^z$, can be expressed as signal times a weighted linear combination of the original attribution scores. Let $\mathcal{A}^x$ denote the row or column of $\mathcal{A}$ corresponding to the input vector $\mathbf{x}$. Then we have:
\begin{equation} \label{eq:map}
    \mathcal{A}^z_i = \mathbf{z}_i \sum_{j=1}^{N} W_{ji} \frac{\mathcal{A}^x_j}{\mathbf{x}_j},
\end{equation}
where the weights~$W_{ji}$ correspond to the linear transformation. 
\end{proposition}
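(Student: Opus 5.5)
The plan is to view the linear map $\mathbf{x} = W\mathbf{z}$ as an extra, fixed linear layer placed in front of the foundation model. The composite $G'(\mathbf{z}) = G(W\mathbf{z})$ then takes $\mathbf{z}$ as its input, and we can ask which attributions $\mathcal{A}^z$ the backpropagation-based explainers (LRP, DeepLIFT, DeepSHAP, gradient$\times$input style methods) assign to it. Every one of these explainers computes attributions by propagating a relevance, or a multiplier, backward through the network, layer by layer. So it suffices to work out how each rule propagates through a single linear layer with no bias. The result will then follow from a modified chain rule.

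First, I will rewrite the given input-space attributions in multiplier form. Define $m_j = \mathcal{A}^x_j / \mathbf{x}_j$. For gradient$\times$input and Integrated Gradients with a zero baseline, $m_j$ is the (averaged) gradient $\partial G/\partial \mathbf{x}_j$. For DeepLIFT and DeepSHAP, it is the multiplier with respect to a reference; I take the reference to be zero, or equivalently I replace $\mathbf{x}_j$ by $\Delta\mathbf{x}_j$ throughout. For LRP, $m_j$ is the relevance per unit activation.

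Second, I will propagate through the linear layer $\mathbf{x}_j = \sum_i W_{ji}\mathbf{z}_i$.
\begin{itemize}
\item For gradients, the chain rule gives $\partial G'/\partial \mathbf{z}_i = \sum_j W_{ji}\, m_j$. Multiplying by $\mathbf{z}_i$ gives exactly Eq.~\eqref{eq:map}.
\item For DeepLIFT, the linear rule assigns multipliers $m^z_i = \sum_j W_{ji} m_j$, which is the same expression. This is exact for linear layers because $\Delta \mathbf{x} = W\Delta\mathbf{z}$.
\item For LRP, the basic $z$-rule sends to $\mathbf{z}_i$ the share $\mathbf{z}_i W_{ji}/\mathbf{x}_j$ of the relevance $\mathcal{A}^x_j$. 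Summing over $j$ reproduces Eq.~\eqref{eq:map}.
\end{itemize}
In every case, the proof reduces to checking that the rule's linear-layer propagation is the transpose action $W^\top$ applied to $\mathcal{A}^x \oslash \mathbf{x}$, followed by elementwise multiplication by $\mathbf{z}$.

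Third, I will apply this separately along rows ($N=T$, Fourier) or columns ($N=C$, source model) of $\mathcal{A}$. I will note that for complex DFT coefficients the same algebra holds over $\mathbb{C}$, with real parts taken at the end. I will also note that for $M>N$ the choice of $\mathbf{z}$ is not unique, but the formula holds for any $\mathbf{z}$ satisfying $\mathbf{x}=W\mathbf{z}$. This is because the attributions upstream of $\mathbf{x}$ depend only on $\mathbf{x}$.

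The main obstacle is the division by $\mathbf{x}_j$ and the variety of rules. The $\epsilon$- and $\gamma$-LRP rules modify the denominator, and baselines shift the inputs. For these, I will state the result for the basic rules, where $\epsilon\to0$ and the baseline is zero, and interpret $\mathcal{A}^x_j/\mathbf{x}_j$ as the multiplier whenever $\mathbf{x}_j=0$. I will remark that the stabilized rules give Eq.~\eqref{eq:map} only up to an $O(\epsilon)$ correction.
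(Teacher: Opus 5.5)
Your proposal is correct and follows essentially the same route as the paper: you treat $\mathbf{x}=W\mathbf{z}$ as a fixed linear layer in front of $G$, then verify the formula for each class of explainer. For Gradient$\times$Input and Integrated Gradients you use the chain rule, for DeepLIFT the linear multiplier rule $m_{\Delta\mathbf{z}_i\Delta\mathbf{x}_j}=W_{ji}$ with zero reference, and for LRP the basic $z$-rule, whose denominator is exactly $\mathbf{x}_j$. Your additional remarks are sensible refinements that the paper's proof does not spell out: the non-uniqueness of $\mathbf{z}$ when $M>N$, the case $\mathbf{x}_j=0$, and the small-parameter deviation for stabilized LRP rules.
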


\begin{proof}
    Treating the subspace transformation as a linear change of variables, we define the composite function $\tilde{G}(\mathbf{z}) := G(W \mathbf{z}) = G(\mathbf{x}) = y$. Since $W$ is fixed and linear, all attribution rules of post-hoc explainers apply through the standard composition. We outline the proof for three representative classes of explainers. 

\smallskip
    \paragraph{Layer-wise Relevance Propagation (LRP)} The proof of Proposition~1 for LRP is detailed in~\cite{vielhaben2024explainable} and restated here for completeness. Using standard relevance propagation for linear layers and the conservation principle, we obtain the relationship:
    \begin{equation}\label{eq:lrp}
        \mathcal{A}^z_i =  \sum_{j=1}^{N} \frac{W_{ji} \mathbf{z}_i}{\sum_{k} W_{jk} \mathbf{z}_k} \mathcal{A}^x_j.
    \end{equation}
    Note that Eq.~\eqref{eq:lrp} is equivalent to Eq.~\eqref{eq:map} as $\sum_{k} W_{jk} \mathbf{z}_k = \mathbf{x}_j$.

\smallskip
    \paragraph{Gradient-Based Explainers} This proof hinges on the chain rule. Let us consider the simplest method of Gradient times Input (G$\times$I). The chain rule over the composite function $\tilde{G}(\mathbf{z})$ yields
    \begin{equation} \label{eq:chainrule}
        \frac{\partial \tilde{G}}{\partial \mathbf{z}_i}
        = \sum_{j=1}^{N} \frac{\partial G}{\partial \mathbf{x}_j} \cdot \frac{\partial \mathbf{x}_j}{\partial \mathbf{z}_i}
        = \sum_{j=1}^{N} \frac{\partial G}{\partial \mathbf{x}_j} \cdot W_{ji} .
    \end{equation}

    Thus, the attribution propagates linearly through $W$ as follows:
    \begin{equation} \label{eq:gxi}
        A^z_i = \mathbf{z}_i \frac{\partial \tilde{G}(\mathbf{z})} {\partial \mathbf{z}_i}  =  \mathbf{z}_i  \sum_{j} \frac{\partial {G}(\mathbf{x})} {\partial \mathbf{x}_j} W_{ji} =  \mathbf{z}_i  \sum_{j} W_{ji} \frac{\mathcal{A}^x_j}{\mathbf{x}_j} 
    \end{equation}     

    Consider Integrated Gradients (IG), which reduces to a path-integral accumulation of input gradients that satisfies crucial axioms of completeness and implementation invariance~\cite{sundararajan2017axiomatic}. By combining the chain rule in Eq.~\eqref{eq:chainrule} and the linearity property of integration given $x = Wz$, we obtain the following attributions for IG:
    \begin{align} \nonumber
        A^z_i &= \mathbf{z}_i \int_\alpha \frac{\partial \tilde{G}(\alpha \mathbf{z})} {\partial \mathbf{z}_i} d\alpha = \mathbf{z}_i \int_\alpha \textstyle \sum_{j} \frac{\partial {G}(\alpha \mathbf{x})} {\partial \mathbf{x}_j} W_{ji} \,d\alpha \\[2ex]
        &  =  \mathbf{z}_i \sum_{j=1}^N W_{ji} \frac{\mathcal{A}^x_j}{\mathbf{x}_j}
    \end{align}
    Note that the final step is a direct result of the definition of Integrated Gradients in the $x$-space, $\mathcal{A}^x_j =\mathbf{x}_j \int_\alpha \frac{\partial {G}(\alpha \mathbf{x})} {\partial \mathbf{x}_j} d\alpha $.

\smallskip
    \paragraph{DeepLIFT and DeepSHAP} The proof for this class of explainers uses the backpropagation and chain rules of discrete differentials~$m$. This formulation leads to linear redistribution under fixed affine transformations. Specifically, the chain rule states
    \begin{equation}
        m_{\Delta_{\mathbf{z}_i} \Delta_{y}}  = \sum_j m_{\Delta_{\mathbf{z}_i} \Delta_{\mathbf{x}_j}} m_{\Delta_{\mathbf{x}_j} \Delta_{y}},
    \end{equation}
    where $\Delta_{(\cdot)}$ is the difference from a fixed reference. For the linear relation $ \mathbf{x} = W\mathbf{z}$, DeepLIFT and DeepSHAP assign $m_{\Delta_{\mathbf{z}_i} \Delta_{\mathbf{x}_j}} = W_{ji}$. Further, using the equality $m_{\Delta_{\mathbf{x}_j} \Delta_{y}} = \frac{\mathcal{A}^x_j}{ \Delta_{\mathbf{x}_j}}$ we have    
    \begin{equation} \label{eq:deeplift}
        \mathcal{A}^z_i = \Delta_{\mathbf{z}_i} \cdot m_{\Delta_{\mathbf{z}_i} \Delta_{y}}  =  \Delta_{\mathbf{z}_i} \cdot \sum_j W_{ji}\frac{\mathcal{A}^\mathbf{x}_j}{ \Delta_{\mathbf{x}_j}}.
    \end{equation}
    By setting the reference to zero, i.e., $\Delta_{\mathbf{z}_i} = \mathbf{z}_i$ and  $\Delta_{\mathbf{x}_i} = \mathbf{x}_i$, Eq.~\eqref{eq:deeplift} becomes equivalent to Eq.~\eqref{eq:map} in the proposition. 
    
    For DeepSHAP, which builds on DeepLIFT, the proof follows the same structure with empirical average over multiple references.        
     \hfill $\square$
\end{proof}

\begin{figure*}[!ht]
    \centering
    \includegraphics[width=0.8\linewidth]{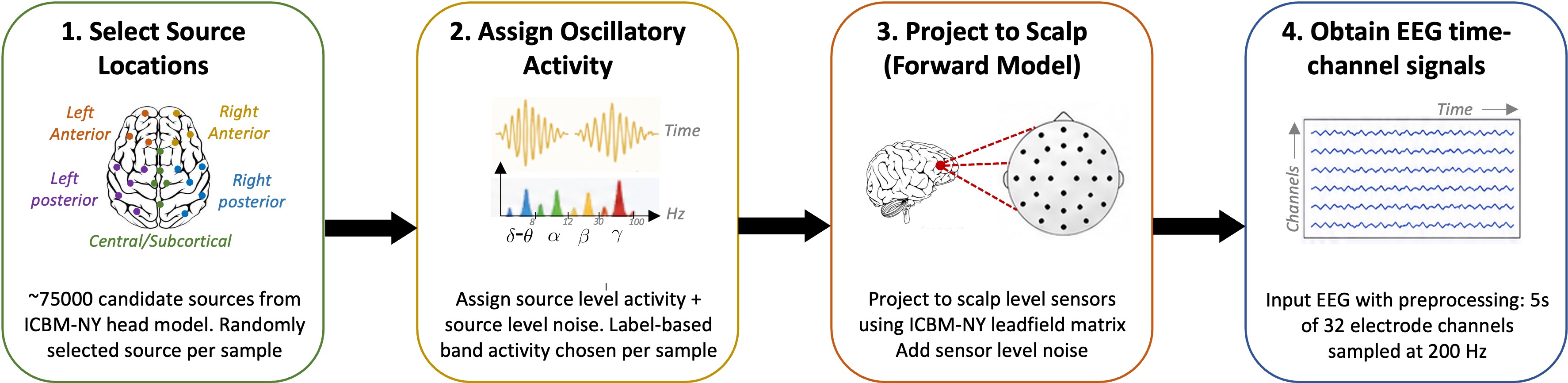}
    \caption{Simulated EEG data generation pipeline using SEREEGA. (1) Initialize the head model and select a source location, (2) Assign class-associated frequency signal, (3) Generate EEG from source activity using the leadfield matrix, and (4) Preprocess the EEG using standard methods.}
    \label{fig:sim}
\end{figure*}
 
\medskip \noindent
Taken together, Eq.~\eqref{eq:map} is theoretically grounded and preserves the essential properties of the base explainers (e.g., faithfulness, conservation, and stability) without modifying $G(\cdot)$. EEG-PRISM leverages this result to map time-channel input attribution scores to the spectral and spatial domains as described below.

\medskip
\subsubsection{Spectral Domain}
For each EEG channel, the spectral representation $\mathbf{z}^c_{1:T}$ of the EEG signal $\mathbf{x}^c_{1:T}$ is obtained via the Discrete Fourier transform (DFT), which is a predefined linear and invertible operation. The caveat is that a DFT produces complex-valued coefficients, which would, in turn, lead to complex-valued attribution scores due to the multiplication in Eq.~\eqref{eq:map}. As seen in~\cite{vielhaben2024explainable}
we decompose the DTF coefficients~$\mathbf{z}^c_{1:T}$ into its real and imaginary components. The inverse DFT to reconstruct the EEG data $\mathbf{x}^c_{1:T}$ can be written
\begin{equation} \label{eq:idft}
    \mathbf{x}^c_j = \sum_{i=1}^T \text{Re}(\mathbf{z}^c_i) \text{cos}\left(\frac{2\pi ji}{T}\right) - \text{Im}(\mathbf{z}^c_i) \text{sin}\left(\frac{2\pi ji}{T}\right).
\end{equation}
where $\text{Re}(\cdot)$ and $\text{Im}(\cdot)$ denote the real and imaginary components, respectively. From here, we create a new vector representation by concatenating the real and imaginary parts: $[\text{ Re}(z^c)  \text{ } | \text{ Im}(z^c) \text{ } ] $. 
The linear weight $W_{ij}$ to map the attributions from the time domain to the spectral domain is obtained from Eq.~\eqref{eq:idft} as $\cos\!\left({2\pi j i/T}\right)$ for the real part and $-\sin\!\left({2\pi j i/T}\right)$ for the imaginary part of the vector. 

Substituting these expressions into Eq.~\eqref{eq:map} and adding the attribution scores of the real and imaginary components gives us the spectral attributions per channel as follows:
\begin{equation}
    \mathcal{A}^{F}_i = \frac{1}{C} \sum_{c=1}^C\sum_{j=1}^T \left[  \text{Re}(\mathbf{z}^c_i) \text{cos}\left(\frac{2\pi ji}{T}\right) - \text{Im}(\mathbf{z}^c_i) \text{sin}\left(\frac{2\pi ji}{T}\right) \right] \frac{\mathcal{A}^{c}_j}{x^c_j} 
\end{equation}

Finally, averaging across all electrode channels provides the overall spectral-domain attribution $\mathcal{A}^{F}$ for the EEG data. 

\medskip
\subsubsection{Spatial Domain} 
The relationship between the multichannel EEG data at time point~$t$, i.e., $\mathbf{x}^{1:C}_t$ and the corresponding spatial source domain~$\mathbf{z}^{S}_t$ is modeled via the EEG forward model $\mathbf{x}^{1:C}_t = W \mathbf{z}^{S}_t$. Here, $\mathbf{z}^{S}_t \in \mathbb{R}^{M}$ denotes the instantaneous source-level (e.g., cortical) activity across $M$ sources, and $W \in \mathbb{R}^{C \times M}$ is the forward matrix derived from biophysically informed head conductivity models that assume a fixed normal dipole orientation. Since the EEG forward problem provides a well-defined linear mapping, the transformation weights $W_{ji}$ in Eq.~\eqref{eq:map} can be directly obtained from the in-built matrix. A straightforward substitution of these weights along with $z^{S}_t$ yields instantaneous attribution scores, which are then averaged across time to obtain the source domain attribution, $\mathcal{A}^{S}$.

The primary consideration in applying Eq.~\eqref{eq:map} is that of estimated the source-level activity lies in obtaining the source activity $\mathbf{z}^{S}_t$ required for the computation. Estimating $\mathbf{z}^{S}_t$ corresponds to the EEG inverse problem, for which a range of well-established solvers exist. Since the number of cortical sources is often greater than the number of EEG channels, i.e., $M \gg C$, the problem is inherently ill-posed and requires appropriate regularization to invert.

Importantly, EEG-PRISM does not amplify or introduce errors into the inverse mapping. Rather, EEG-PRISM operates post-hoc and propagates attribution scores through a linear mapping. As a result, any error in the estimated source $\mathbf{z}^{S}$ is transferred linearly, with no additional distortion beyond a constant scaling factor determined by the transformation. Formally, if the true source $\mathbf{z}^{\text{true}}$ differs from the estimated $\mathbf{z}^{S}$, then the induced error in the mapped attribution remains proportional to this discrepancy as follows:
\begin{equation} \label{eq:approx}
    \left\Vert \mathcal{A}^{true} - \mathcal{A}^{S} \right\Vert \leq \left\Vert z^{true} - z^{S}\right\Vert \cdot \underbrace{\left\Vert\sum_{j=1}^{N} w_{ji} \frac{\mathcal{A}^j}{x^j}\right\Vert}_{\text{constant}}
\end{equation}
Thus, EEG-PRISM can seamlessly project attributions to the spatial domain, while preserving linear fidelity and not introducing any distortion beyond what is inherent in the inverse solver.

\subsection{Datasets}
Our experiments include three datasets: a simulated dataset for quantitative evaluation of EEG-PRISM against a known ground truth, and two real-world datasets used to demonstrate clinical potential. 

\medskip \noindent
\subsubsection{Simulated Dataset} We generate EEG using SEREEGA \cite{krol2018sereega} with its built-in source-to-scalp forward modeling framework, as shown in Fig.~\ref{fig:sim}. A 32-channel Biosemi electrode montage is employed, along with the ICBM New York head model for EEG forward modeling with 74,382 sources \cite{huang2016new}. We construct ``classes" with labels that are explicitly tied to the underlying source-level signal characteristics. This strategy enables a controlled evaluation of EEG-PRISM across different settings. The dataset consists of four classes, each defined by a distinct frequency band in the source signal domain: delta/theta (1–8 Hz), alpha (8–12 Hz), beta (13–30 Hz), and gamma (30–100 Hz). For each sample, a source is randomly selected from the brain, and its activity is assigned oscillatory components within the frequency range associated with its class. The resulting signals are projected to the scalp to form the time–channel EEG inputs. A total of 100 samples per class (400 total) are generated. Thus, classification depends on learning frequency-specific representations internally within the foundation model from the time-channel input. 

\medskip \noindent
\subsubsection{TUSZ Dataset} We selected 124 subjects (58~M, 66~F) with focal seizure onsets from the publicly accessible Temple University Hospital Seizure (TUSZ) corpus \cite{shah2018temple}. Patients ranged in age from 19 to 91 years (55.2 $\pm$ 16.6 years). The TUSZ corpus contains several 19-channel average-referenced EEG recordings per epilepsy patient, along with expert annotations of seizure intervals. We randomly crop 30 seconds of non seizure data and up to 30 seconds of seizure data from every EEG recording of every patient; the cropped data is segmented into non-overlapping 2s windows, resulting in 39,123 non seizure and 18,603 seizure windows in our final TUSZ dataset.

The EEG foundation models are adapted for binary classification of seizure versus non-seizure windows. We use the unstructured clinical notes to extract the seizure onset channels and relevant signal frequency bands for each patient. The patient-level seizure onsets are organized as: 31 left frontal, 14 right frontal, 26 left posterior, 45 right posterior, and 8 central. While the EEG foundation models do not incorporate the onset location, this spatial information is clinically relevant. Therefore, we quantify the alignment between the spatial attributions of EEG-PRISM and the clinician-determined onsets. 

\medskip \noindent
\subsubsection{ACE Dataset} We use resting-state EEG data from the Autism Center for Excellence (ACE) project (NDAR study \#2021)~\cite{neuhaus2021resting}; the dataset includes participants with autism spectrum disorder (ASD) and typically developing controls (TDC). Of the 339 enrolled participants, N=177 (87 ASD, 90 TDC) met our inclusion criteria, which required artifact-free EEG, valid diagnostic and behavioral assessments, and no familial relatedness. Data were acquired using a 128-channel EGI Net Amps 300 system with HydroCel nets. Recordings consisted of alternating task blocks, from which eyes-open resting-state EEG segments were extracted for analysis.

The EEG foundations models are adapted for subject-level ASD versus TDC classification using all the available time-channel EEG per participant. We evaluate the spectral and spatial attribution maps provided by EEG-PRISM against findings in the ASD literature.

\medskip \noindent
\subsubsection*{Preprocessing}
We apply a uniform preprocessing pipeline across all datasets. The EEG signals are downsampled to 200 Hz and segmented into 5-second windows for the simulated dataset, and 2-second windows for real-word datasets. The 5-second windows in the simulated dataset is selected to match the default configurations of the EEG foundation models. The smaller window size in real-word datasets were chosen to improve the seizure detection resolution and to match the experimental protocol of ACE dataset. We applied a band-pass filter between 0.1-100 Hz for the simulated and ACE datasets and between 0.1-30 Hz for the TUSZ dataset. We also apply a 60 Hz notch filter to remove power line noise. Finally, we clip the signals at two standard deviations from mean to remove high intensity artifacts. The time-channel signals are normalized to have zero mean and unit variance and input to the EEG foundation models.

\subsection{Experimental Setup}
The experimental setup comprises of three main steps: (i)~Training the EEG foundation models (ii) Computing input attribution scores and (iii) Mapping the scores to spectral and spatial domains. Our code will be available in our GitHub repository\footnote{https://github.com/deeksha-ms/EEG-PRISM} upon acceptance.

\medskip \noindent
\textbf{Foundation Models:} We apply EEG-PRISM to models encompassing five deep network architectures. The two task-specific models, EEGNet \cite{lawhern2018eegnet} and AttentionBaseNet (ABNet)~\cite{wimpff2024eeg}, are trained from scratch. Three larger generic models, LaBraM~\cite{jiang2024large}, CBramod~\cite{wang2025cbramod}, and LUNA~\cite{doner2026luna}, are initialized from released weights and subsequently fine-tuned on each dataset. All models are implemented using the Braindecode package~\cite{braindecode} with default configurations as provided in their original releases, ensuring reproducibility across experiments.

\medskip \noindent
\textbf{Training Setup:} For each dataset, we augment the backbone foundation model with a task-specific classification head and train the entire network in an end-to-end fashion using a cross-entropy loss. 

We use the Adam optimizer~\cite{kingma2014adam} for training with with cosine learning rate scheduling. Training is capped at a maximum of 50 epochs, with early stopping based on the validation set performance to prevent overfitting. We follow a subject-independent 5-fold nested cross-validation scheme. Within each outer fold, an inner train–validation split is used to select the optimal learning rate within the [$10^{-3}-10^{-6}$] range. The selected model is then retrained using both the training and validation sets and is evaluated on the held-out test fold. Performance is reported as the mean and standard deviation of evaluation metrics across the five outer folds. All experiments are implemented in PyTorch (v2.11) and executed on a single A100 GPU.

\medskip \noindent
\textbf{Attribution Scores and Mapping with EEG-PRISM:} We select the best-performing model in every test-fold and compute the input-level attributions scores using LRP, IG, DeepLIFT, and DeepSHAP in the time–channel domain. LRP is implemented using Zennit package while the others are implemented using the Captum Python package~\cite{kokhlikyan2020captum}. IG and DeepLIFT uses a zero reference, while DeepSHAP uses random samples from training set as reference. 

We use EEG-PRISM to transform each set of attribution scores into both spectral and source-space attributions. For spectral mapping, we implement the DFT and its inverse from scratch in PyTorch in order to extract its coefficients in Eq.~\eqref{eq:map}. The DFT is applied independently to each channel using a sampling frequency of $f_s$=200 Hz. 
For spatial mapping, we use MNE-Python \cite{gramfort2013meg} to compute the forward model coefficients while assuming the standard \textit{fsaverage} template brain and a three-layer boundary element method (BEM) head model. The inverse solution is computed using sLORETA with the default regularization settings in MNE \cite{pascual2002standardized, grech2008review}. 
\bgroup
\renewcommand{\arraystretch}{1.5}
\begin{table*}[t]
    \centering
    \caption{Accuracy of EEG-PRISM derived attributions vs ground-truth features in simulated dataset. \textbf{Left:} Comparison between the top spectral band and ground truth band. \textbf{Right:} Comparison between the center of mass location and ground truth source.}
    \begin{tabular}{|c|c|c|c|c||c|c|c|c|}
    \hline
    \multirow{2}{*}{Foundation Model}       & \multicolumn{4}{c||}{Spectral Domain}    &          \multicolumn{4}{c|}{Spatial/Source Domain}\\ \cline{2-9}
    & LRP$\dagger$  & IG & DeepLIFT & DeepSHAP & LRP$\dagger$  & IG & DeepLIFT & DeepSHAP   \\ \hline
    EEGNet     & 1.000$\pm$0.000    &  0.942$\pm$0.037  &  0.940$\pm$0.039  &1.000$\pm$0.000& 0.613$\pm$0.043& 0.655$\pm$0.049    &  0.660$\pm$0.052 &   0.600$\pm$0.080     \\ \hline
    ABNet      &  1.000$\pm$0.000    & 0.988$\pm$0.008  &   0.978$\pm$0.015 & 1.000$\pm$0.000 &  0.655$\pm$0.079 & 0.658$\pm$0.088  &  0.655$\pm$0.089  & 0.595$\pm$0.088    \\ \hline
    Labram$^{\dagger}$ &  --   &  0.960$\pm$0.022   & 0.972$\pm$0.022 & 1.000$\pm$0.000 & --  & 0.692$\pm$0.054  & 0.680$\pm$0.063 & 0.638$\pm$0.072 \\ \hline
    Cbramod$^{\dagger}$ &  --  &  0.993$\pm$0.010 & 0.995$\pm$0.006  & 1.000$\pm$0.000 & -- &  0.670$\pm$0.057 & 0.638$\pm$0.079 & 0.628$\pm$0.090\\ \hline
    LUNA$^{\dagger}$ &  --   &  0.945$\pm$0.020  & 0.980$\pm$0.013 & 0.998$\pm$0.005  & -- & 0.648$\pm$0.093  &    0.645$\pm$0.083    &  0.625$\pm$0.089 \\ \hline
    \multicolumn{9}{l}{\footnotesize $\dagger$~We are unable to find a Python package that provides an LRP implementation that is compatible with transformer-based foundation models}
 
    \end{tabular}
    \label{tab:sim_results}
\end{table*}
\egroup
\begin{figure*}[t]
    \centering
    \includegraphics[width=0.99\linewidth]{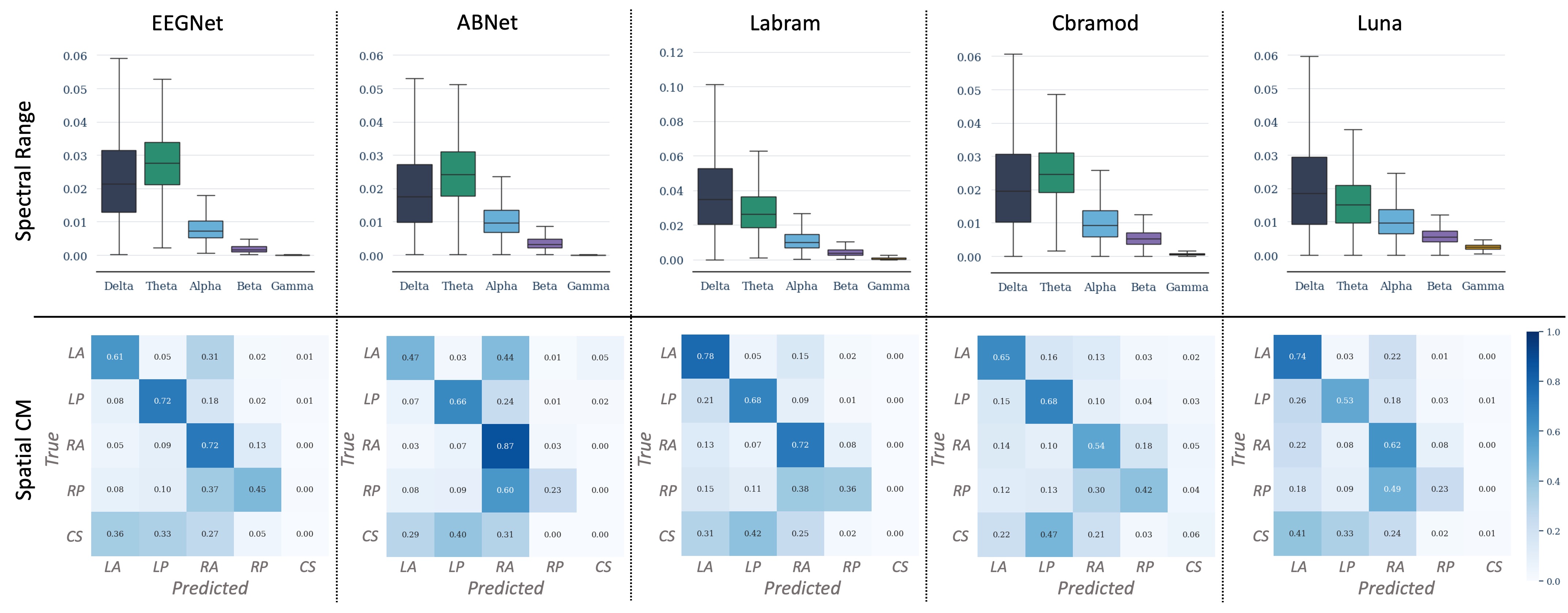}
    \caption{EEG-PRISM results on TUSZ dataset. \textbf{Top:} Range of spectral attributions aggregated into five  bands across all EEG windows in five  models. \textbf{Bottom:} Confusion Matrices (CM) comparing region with maximum aggregated spatial attributions of EEG-PRISM vs ground truth seizure onset zone in clinical notes for five models. \textit{LA: Left Anterior; RA: Right Anterior; LP: Left Posterior; RP: Right Posterior; CS: Central/Subcortical} }
    \label{fig:tuh}
\end{figure*}

\section{Results}

We evaluate the \textit{post hoc} spectral and spatial attributions provided by EEG-PRISM across several configurations. Specifically, we train each foundation model on all three datasets for the specific tasks described in Section~II.B. The models are evaluated on a held-out test set to verify proper learning. For completeness, we report the cross-validated classification performances in Appendix~\ref{section:model_acc}; however, we note that EEG-PRISM is agnostic to classification and can be applied to \textit{any} trained model. We use standard explainers to derive the input time-channel attribution scores on the test data and apply EEG-PRISM to derive the spectral and spatial attributions. We use simulated data to quantify the accuracy of the transformed attributions, and subsequently demonstrate the value-add of EEG-PRISM for recording-level signal characterization in the TUSZ dataset and group-level exploratory biomarkers in the ACE dataset.

\subsection{Simulated Dataset: Comparison with Ground Truth}

The simulated dataset contains known ground-truth spectral and spatial characteristics associated with each class label, which enables us to quantify the accuracy of the EEG-PRISM attribution mapping. We note that the classification task, itself, is trivial for all five foundation models (Appendix \ref{section:model_acc}), which provides confidence that the models are learning discriminative representations from the data. 

For the spectral analysis, we compute the total positive attribution within frequency bands corresponding to each predicted class label. We then compare the band with the maximum attribution to the ground-truth spectral component used during signal generation. Since there is a direct association between frequency band and ground-truth class labels, all four explainability methods achieve near-perfect accuracy in Table~\ref{tab:sim_results} (left), with minor deviations observed for IG and DeepLift. DeepSHAP achieves the best performance and consistently recovers the ground-truth dominant frequency band from attribution scores. LRP performs comparably on EEGNet and ABNet, but we are unable to find a Python implementation for LRP that is stable for the remaining transformer-based foundation models. Overall, EEG-PRISM can accurately identify the spectral features that are associated with the underlying group differences. This performance is consistent \textit{across multiple models and explainers}.

For the spatial analysis, we use EEG-PRISM to map the input attributions onto 5,526 cortical and subcortical sources defined on the MNE fsaverage template. Although the EEG data was generated using 74,382 sources in the forward model, we reduced the dimensionality for the inverse problem for stability given the ill-posed nature of this problem. In contrast to the spectral evaluation, there no association between the source location and class label, as the former is randomly selected for each EEG input. To quantify the performance of EEG-PRISM, we first identify the spatial locations with the top 5\% of positive attribution scores for a given EEG recording. We then compute the center of mass for the retained locations and evaluate the the Euclidean distance between this center of mass and the true simulated source location. These distances are aggregated across the simulated dataset. Across methods, the average localization error is approximately $3 \pm 2$~cm, indicating generally accurate recovery.

To provide a more nuanced evaluation, we compute quadrant-level accuracy by first assigning the center of mass to one of five regions: left anterior, right anterior, left posterior, right posterior, and subcortical/central (chance level = 20\%). We then compute the lobe-level accuracy of this assignment relative to the ground truth. As shown in Table~\ref{tab:sim_results}, all methods perform above chance in both datasets. IG achieves the best overall performance, indicating that gradient-based methods generally outperform others in this setting with approximate inverse solutions. Among the models, Labram shows better spatial accuracy followed by Cbramod but are not statistically different from each other. Overall, EEG-PRISM correctly localizes the attributions to sources that drive EEG, even in a high-dimensional spatial domain and when those locations are not directly associated with task labels. This enables robust post-hoc EEG analysis.

\begin{figure*}[t]
    \centering
    \includegraphics[width=0.67\linewidth]{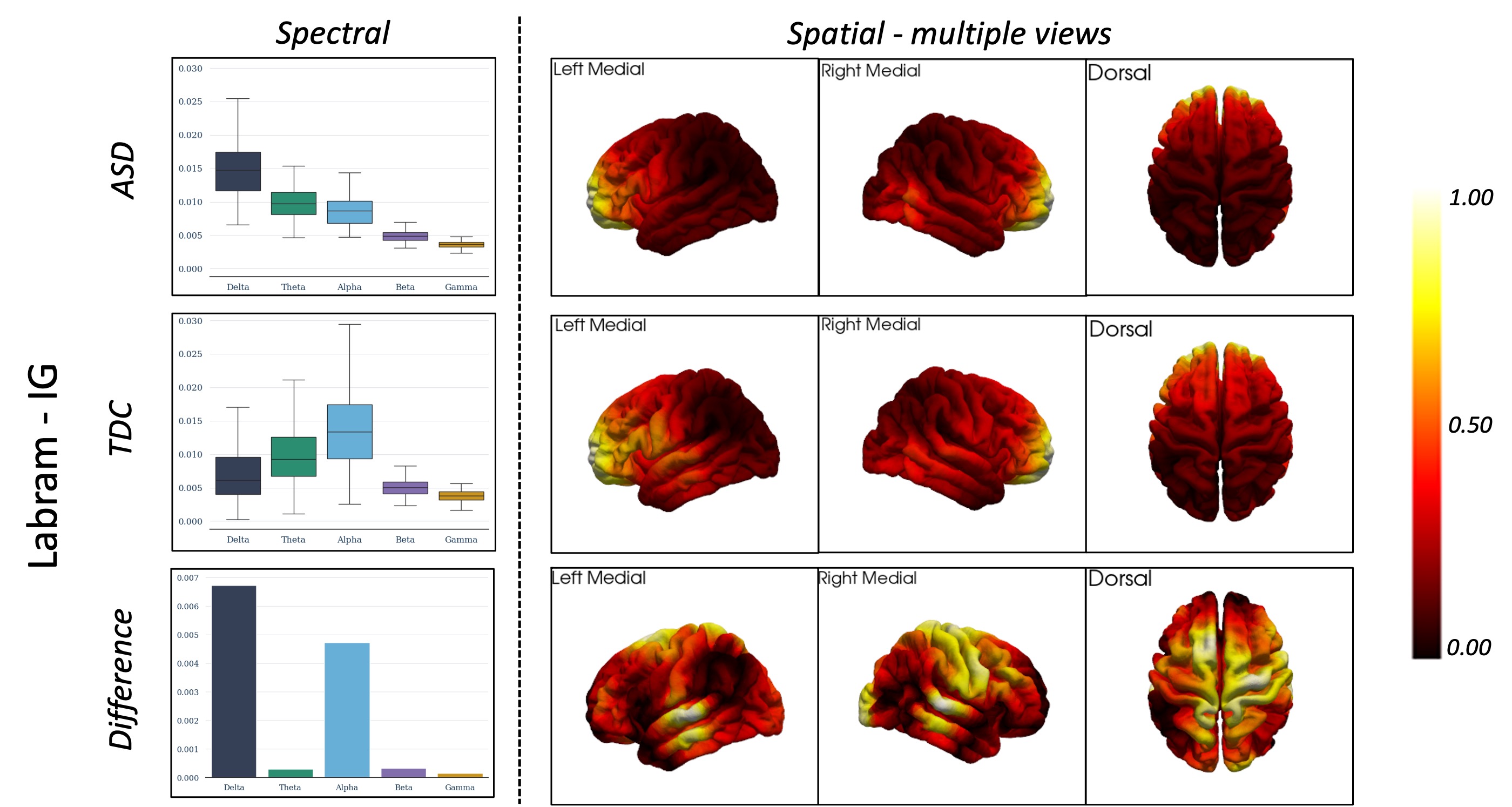}
    \caption{EEG-PRISM results on ACE dataset grouped into ASD subjects (top), TDC subjects (middle), and respective differences across two groups (bottom). Spectral attributions are shown as box plots (left), and spatial attributions are mapped onto the  3D surface (right).}
    \label{fig:ace}
\end{figure*}

\subsection{TUSZ Dataset: Window-Level seizure analysis}

We investigate whether EEG-PRISM can accurately reveal the clinically relevant spectral and spatial information associated with an epileptic seizure. As noted in Section~II.B, the foundation models are trained for window-level seizure versus non-seizure classification and do not directly use spectral or spatial information. We note that the foundation models achieve seizure classification accuracies near $88\%$ (Appendix~\ref{section:model_acc}), which is on par with the literature~\cite{zabihi2026transparent}. After training, we use IG to obtain the time-channel attribution scores on test sets. IG was selected due to its computational efficiency, applicability across all models, and consistent performance in the simulation study. 

In the spectral domain, we selected the correctly classified seizure windows in each EEG recording and averaged the positive EEG-PRISM spectral attributions across the five standard frequency bands. We do not include negative attributions, as they are inversely related to the predicted class. As shown in Fig.~\ref{fig:tuh} (top), delta and theta bands consistently exhibited higher importance, with theta being dominant in three of the five models. These findings align with clinical annotations: 43\% of subject reports abnormalities in the theta band and 35\% also mention the delta band. These bands are commonly associated with epileptiform activity such as spike-sharp waves following by slow waves~\cite{emmady2025eeg}. In contrast, the alpha and beta bands were referenced in 12\% and 11\% of clinical reports, respectively, and received lower attribution scores in that order. The gamma band showed minimal importance, likely due to the preprocessing steps that filtered out frequencies above 30 Hz. In summary, the spectral patterns uncovered by EEG-PRISM are consistent with clinical reports across foundation models and can be leveraged for early prediction, seizure tracking, and clinical stratification. 

In the spatial domain, we analyzed the first 15 windows (30 seconds) of seizure activity for each recording to determine whether EEG-PRISM localize seizure onset zone (SOZ). Here, we used EEG-PRISM to obtain attribution scores over 5,524 cortical and subcortical sources. Similar to the simulated experiment, we aggregated the positive attributions into five regions: left/right anterior, left/right posterior, and central. The region with the highest attribution was taken as the predicted SOZ and compared against clinical ground truth obtained from the report. As shown in Fig.~\ref{fig:tuh} (bottom), the resulting confusion matrix exhibits a diagonal-dominant structure, which indicates meaningful localization performance. We observe that central regions were the greatest source of confusion, likely due to the difficulty in estimating subcortical sources from EEG. Another source of error was confusion between the right and left hemispheres, which may be attributed to strong inter-hemispheric connections in the brain. Overall, EEG-PRISM achieved an average localization accuracy of roughly 50\% across all models, which is comparable to \textit{supervised methods} for SOZ localization~\cite{m2023deepsoz}. Notably, this performance is achieved without any prior knowledge of the SOZ when training the foundation models. Furthermore, EEG-PRISM enables SOZ localization in 3-D source space, which provides a finer spatial resolution for presurgical planning in epilepsy. 

\subsection{ACE Dataset: Group-Level Autism Analysis }

Our final experiment demonstrates how EEG-PRISM can be paired with foundation models for exploratory group-level biomarker analyses. We consider the problem of ASD versus TDC classification. Here, the foundation models perform a binary classification for each 2-second EEG window and then aggregate the predictions to the subject level via majority voting across all windows. The LaBraM foundation model achieves the highest classification accuracy (Appendix~\ref{section:model_acc}, Table~\ref{tab:model_acc}) and provides the backbone for EEG-PRISM. We choose best-performing models to ensure reliable interpretation of attribution scores; however, model performance does not dictate whether EEG-PRISM can be applied, as it operates post-hoc. Once again, we rely on IG to obtain the original time-channel attributions. 

For the spectral analysis, we aggregate the EEG-PRISM attributions into the five canonical frequency bands and average the positive attribution scores across EEG windows for a given subject. For the spatial analysis, we visualize 5,526 source-level attribution scores, averaged across subjects within each group. For visualization, we interpolate the scores using the NiLearn and PyVista packages~\cite{abraham2014machine,sullivan2019pyvista}. Similar to TUSZ, we restrict the analysis to correctly classified subjects and time windows for reliable interpretation.

As shown in Fig.~\ref{fig:ace} (left), correct ASD classification is primarily associated with increased attributions in the delta frequency band, whereas correct TDC classification is driven by alpha band activity. In the context of ASD, which is characterized by differences in social communication, increased delta activity is thought to reflect altered cortical processing, while disrupted alpha rhythms indicate differences in attention and cognitive engagement. Prior studies report changes in both delta power and alpha connectivity in ASD consistent with these patterns~\cite{wang2013resting,kopanska2025exploratory}. At the spatial level, both the ASD and TDC groups exhibit similar attribution patterns, with frontal lobe driving the classification. The group-level differences in spatial attributions tend to localize in the temporal lobes and extend into right central regions. Interestingly, this interpretation is  consistent with established literature: prefrontal cortex supports working memory and higher cognitive processing~\cite{elbaba2023frontal}, while the temporal lobes contribute to auditory processing, language, and social communication~\cite{patriquin2016neuroanatomical}. Thus, EEG-PRISM enables post-hoc biomarker discovery with foundation models consistent with prior work. 


\section{Discussion}

In this work, we addressed a key limitation of emerging foundation models for EEG analysis. While these models achieve strong performance using high-dimensional time–channel inputs, they are opaque with limited interpretability. Post-hoc ``explainers" partially mitigate this issue by identifying the input features associated with a given prediction. However, the input EEG (i.e., time-channel space) does not align with clinically meaningful spectral and spatial representations. Therefore, we have introduced EEG-PRISM as a principled framework to project the input-level attribution scores derived from \textit{mulitple post-hoc explainers} into other domains. EEG-PRISM uses a combination of signal processing theory with chain rule-based backpropagation to enable the mapping to be done \textit{without modifying or retraining the foundation model}. We have systematically evaluated the accuracy and fidelity of EEG-PRISM across five foundation models and four attribution methods. We have further validated its clinical utility in epilepsy by demonstrating its alignment with spectral and spatial patterns in clinical reports. Finally, we have demonstrated the potential of EEG-PRISM for group-level biomarker analysis using autism as a case study. Taken together, EEG-PRISM is a lightweight and ubiquitous tool that seamlessly works across foundation models, explainability methods, and clinical/research domains. 

Intuitively, mapping the time-channel attributions to the spectral domain is straightforward due to the invertible DFT incorporated within EEG-PRISM. In simulated experiments, this mapping capitalizes on the association between frequency band and class label to achieve near-perfect recovery of the ground truth phenomenon. In epilepsy, EEG-PRISM consistently identified delta–theta contributions to seizure detection, which aligns with the established clinical knowledge of epileptiform activity~\cite{emmady2025eeg,clemens2000eeg}. In our exploratory autism analysis, EEG-PRISM identified interactions between delta and alpha bands as influencing model predictions. Delta slowing is commonly associated with atypical cortical processing, while alpha rhythms are linked to wakeful rest, attention, and cognitive engagement. Prior studies in autism have reported alterations in both delta power and alpha connectivity in autism~\cite{wang2013resting,kopanska2025exploratory}. 
More broadly, spectral EEG features are closely tied to brain function: delta activity relates to sleep and pathological slowing, theta to memory and cognitive control, alpha to attention and inhibitory processing, beta to motor activity, and gamma to higher-order cognition and binding \cite{buzsaki2006rhythms}. 
EEG-PRISM enables any foundation model to be interpreted in this space, ensuring that model explanations remain aligned with established neuroscientific principles and clinical patterns.

The mapping from input to spatial source domain using EEG-PRISM relies on user-defined forward and inverse models. Despite this challenge, it achieved high accuracy in simulated experiments with known ground truth phenomena, thus demonstrating robust \textit{post-hoc recovery} of spatial patterns. EEG-PRISM also achieved competitive SOZ localization performance in a fully post hoc setting, despite the foundation models not having access to spatial supervision during training. In fact, it performed comparably to prior supervised approaches for SOZ localization~\cite{m2023deepsoz}. These results highlight the potential of EEG-PRISM for clinical translation, including applications in treatment planning for epilepsy. Extending this framework to autism, we identified neurologically plausible associations in the  frontal and temporal areas. The prefrontal cortex is widely implicated in cognitive control and working memory, while temporal and central regions are associated with memory, auditory processing, language, and emotional regulation which likely  differ between individuals with ASD and TDCs~\cite{pelphrey2008brain,courchesne2007mapping}. 
Thus, EEG-PRISM can identify spatially meaningful regions that drive model predictions, thus providing an important foundation for noninvasive biomarker discovery and hypothesis generation in neuropsychiatric disorders.

Another way to evaluate the ``information gain" of EEG-PRISM is via the mean-normalized range of the attribution scores in each domain. Specifically, in cases where feature strongly influences the output, the attribution scores will be uniformly distributed, leading to low range. Conversely, if there are a few dominant features, then the attribution scores will be more concentrated and increase the range. Mean normalization ensures that the values are comparable across vectors of different sizes. Table~\ref{tab:cv} reports the mean-normalized ranges for the Labram model and IG explainer in the original time-channel domain, the spectral domain, and the spatial source domain. The default time-channel domain exhibits the lowest mean-normalized range, indicating that it is \textit{not} the natural space to learn predictive representations. Across datasets, the spectral domain shows substantially higher mean-normalized range. The spatial domain also shows higher values than input space, particularly in ACE dataset where we saw that the information was localized to frontal regions. Overall, our analysis suggests that there is more concentrated information in the spectral and spatial domains, as often used in clinical practice.

\bgroup
\renewcommand{\arraystretch}{1.5}
\begin{table}[t]
\centering
\caption{mean-normalized range of EEG-PRISM attributions}
\begin{tabular}{|c|c|c|c|}
\hline
Dataset  & Simulated  & TUSZ & ACE  
         \\ \hline
{Input} &8.137 & 4.163 &  3.239 \\ \hline

{Spectral}& \textbf{31.691}   & \textbf{215.192} &  6.897 \\ \hline
{Spatial}& 13.395   & 7.714  & \textbf{8.981}   \\ \hline
\end{tabular}
\label{tab:cv}
\end{table}
\egroup

In this work, we applied EEG-PRISM to a variety of foundation models and explainers to demonstrate its universal applicability. Currently, larger pretrained models, such as Labram, Cbramod, and LUNA, provide a strong starting point and adapt quickly with small datasets like ACE. For real-world datasets with well-understood tasks (e.g., seizure detection), model choice matters less, i.e., most models perform similarly. 
Across models, attribution scores are broadly consistent, though we expect that the best model/explainer configuration will differ across real-world datasets. Our work demonstrates that regardless of the foundation model and downstream task, we can use EEG-PRISM to obtain more clinically relevant interpretations.

Likewise, we have shown that EEG-PRISM pairs well with a range of backpropagation-based attribution methods. Notably, IG performs well on models with smooth gradients, has strong axiomatic properties, and produces consistent results.
DeepLIFT addresses gradient saturation using discrete differences and model-specific rules. However, it can show instability with a single baseline. DeepSHAP approximates Shapley values by combining DeepLIFT with multiple baselines. It performed best for deterministic spectral attributions. However, it showed slight degradation in the spatial domain, likely due to error magnification across multiple baselines. As foundation models increasingly adopt transformer architectures, EEG-PRISM with IG presents as a stable, model-agnostic attribution approach across datasets. LRP was not applied to transformers in this study, as LRP rules for self-attention remain an active area of research. 

One limitation of this work is that we analyzed spectral and source domains separately. Future work should explore joint spectral-source representations as a methodological extension while ensuring that approximate inversion does not compromise spectral interpretation. 
Another limitation is that our theoretical guarantees were derived for linear mappings only. Future work will explore mappings for complex nonlinear transformations, thus allowing researchers to investigate EEG connectivity and their relationship with model outputs. Approaches such as linear surrogate models and autoencoder-based inversion may provide practical strategies for approximating these transformations. Finally, using a simple average to summarize attribution patterns across EEG windows can obscure clinically-relevant signal variability. Thus, we will explore algorithmic improvements to identify consistent attributions across windows and across subjects for group-level analyses. Future work will apply EEG-PRISM to other neurological conditions, including disease subtyping and treatment response monitoring by tracking identified multi-domain biomarkers.



\section{Conclusion}

In summary, we presented EEG-PRISM as a principled and flexible framework for interpreting EEG foundation models by projecting attribution scores into clinically meaningful spectral and spatial domains. Across simulated and real-world datasets, EEG-PRISM demonstrated strong fidelity in recovering relevant frequency patterns and spatial regions, aligning with established neuroscientific and clinical knowledge. Despite operating in a fully post hoc and model-agnostic setting, it achieved competitive performance in tasks such as seizure characterization and localization, while also showing promise in exploratory analyses such as autism. By bridging the gap between high-performing black-box models and interpretable neurological insights, EEG-PRISM provides a practical pathway toward more transparent, reliable, and clinically grounded EEG analysis, with broad potential for future research and translational applications.

\appendices

\section*{Acknowledgment}
AI tools assisted with general code completion and manuscript formatting. All content were thoroughly reviewed by the authors. We thank the ACE Network and GENDAAR for collecting EEG data.

\section{Foundation Model Performance on the Primary Prediction Task} \label{section:model_acc}

Table~\ref{tab:model_acc} reports the average classification accuracy of the primary task of five foundation models in simulated, TUSZ, and ACE datasets. In simulated and TUSZ datasets, window-level accuracy of abnormality  (4-way) and seizure classification (binary)  are reported, respectively, where all models achieved similar performance.  In ACE dataset, subject level accuracy in ASD vs TDC binary classification is reported after majority voting of window-level model output across subject's EEG recording. Labram achieved the the highest accuracy (66\%), followed  by LUNA (62\%).

\bgroup
\renewcommand{\arraystretch}{1.5}
\begin{table}[!ht]
\centering
\caption{model accuracy in classification across datasets}
\begin{tabular}{|c|c|c|c|}
\hline
Model & Simulated  & TUSZ & ACE 
         \\ \hline
{EEGNet}& 0.995$\pm$0.006 & 0.878$\pm$0.033  & 0.532$\pm$ 0.023    \\ \hline

{ABNet}& 1.000$\pm$0.000&  0.878$\pm$0.026 &  0.596$\pm$ 0.036    \\ \hline
{Labram}&1.000$\pm$0.000& 0.864$\pm$0.039    & \textbf{0.660 $\pm$ 0.072}    \\ \hline
Cbramod & 1.000$\pm$0.000  & 0.868$\pm$0.043   & 0.582$\pm$ 0.062 \\ \hline
LUNA & 1.000$\pm$0.000  & 0.878$\pm$0.027   & 0.624$\pm$ 0.043 \\ \hline
\end{tabular}
\label{tab:model_acc}
\end{table}
\egroup




\end{document}